\title{\LARGE \bf
db-A*: Discontinuity-bounded Search for Kinodynamic Mobile Robot Motion Planning
}
\author{Wolfgang Hönig, Joaquim Ortiz-Haro, and Marc Toussaint%
\thanks{The authors are with TU Berlin, Germany. {\tt\footnotesize \{hoenig, joaquim.ortizdeharo, toussaint\}@tu-berlin.de}.}%
\thanks{The research was funded by the Deutsche Forschungsgemeinschaft (DFG, German Research Foundation) - 448549715 and by the German-Israeli Foundation for Scientific Research (GIF) grant I-1491-407.6/2019. Joaquim Ortiz-Haro thanks the International Max-Planck Research School for Intelligent
Systems (IMPRS-IS) for the support.
}%
}
\newtheorem{theorem}{Theorem}
\newtheorem{definition}{Definition}
\newtheorem{remark}{Remark}
\newtheorem{example}{Example}
\newcommand*{\tikzmk}[1]{\tikz[remember picture,overlay,] \node (#1) {};\ignorespaces}
\newcommand{\marklineFour}[1]{\tikz[remember picture,overlay]{\node[yshift=2pt,xshift=#1,fill=yellow!100,opacity=.25,fit={(A)($(A)+(0.95\linewidth,-3.3\baselineskip)$)},rounded corners=4pt] {};}\ignorespaces}
\newcommand{\marklineThree}[1]{\tikz[remember picture,overlay]{\node[yshift=2pt,xshift=#1,fill=yellow!100,opacity=.25,fit={(A)($(A)+(0.95\linewidth,-2.3\baselineskip)$)},rounded corners=4pt] {};}\ignorespaces}
\newcommand{\marklineOne}[1]{\tikz[remember picture,overlay]{\node[yshift=2pt,xshift=#1,fill=yellow!100,opacity=.25,fit={(A)($(A)+(0.95\linewidth,-0.3\baselineskip)$)},rounded corners=4pt] {};}\ignorespaces}
\newcommand{\vx}{\mathbf{x}}    %
\newcommand{\vu}{\mathbf{u}}    %
\newcommand{\vzero}{\mathbf{0}}    %
\newcommand{\seqX}{\mathbf{X}}    %
\newcommand{\seqU}{\mathbf{U}}    %
\newcommand{\sX}{\mathcal{X}}   %
\newcommand{\sU}{\mathcal{U}}   %
\newcommand{\sW}{\mathcal{W}}   %
\newcommand{\sM}{\mathcal{M}}   %
\newcommand{\vf}{\mathbf{f}}    %
\newcommand{\vg}{\mathbf{g}}    %
\newcommand{\vh}{\mathbf{h}}    %
\DeclareMathOperator{\step}{step}
\DeclareMathOperator*{\argmax}{argmax}
\begin{document}

\maketitle
\thispagestyle{empty}
\pagestyle{empty}

\begin{abstract}
We consider time-optimal motion planning for dynamical systems that are translation-invariant, a property that holds for many mobile robots, such as differential-drives, cars, airplanes, and multirotors.
Our key insight is that we can extend graph-search algorithms to the continuous case when used symbiotically with optimization.
For the graph search, we introduce discontinuity-bounded A* (db-A*), a generalization of the A* algorithm that uses concepts and data structures from sampling-based planners.
Db-A* reuses short trajectories, so-called motion primitives, as edges and allows a maximum user-specified discontinuity at the vertices.
These trajectories are locally repaired with trajectory optimization, which also provides new improved motion primitives.
Our novel kinodynamic motion planner, kMP-db-A*, has almost surely asymptotic optimal behavior and computes near-optimal solutions quickly.
For our empirical validation, we provide the first benchmark that compares search-, sampling-, and optimization-based time-optimal motion planning on multiple dynamical systems in different settings.
Compared to the baselines, kMP-db-A* consistently solves more problem instances, finds lower-cost initial solutions, and converges more quickly.
\end{abstract}

\section{Introduction}

Motion planning for robots with known kinodynamics remains challenging, especially when a time-optimal motion is desired.
Consider the example in \cref{fig:overview} of a simple dynamical model in 2D (unicycle, 3-dimensional state space and 2-dimensional control space).
Finding the time-optimal solution is surprisingly challenging for state-of-the-art methods when constraining the control space to model a plane with a malfunctioning rudder, i.e., with a positive minimum speed and asymmetric angular velocity limits.

Current planning approaches are sampling-based, search-based, optimization-based, or hybrid.
Each of these methods has their strengths and weaknesses.
Sampling-based planners can find initial solutions quickly and have strong guarantees for convergence to an optimal solution.
However, in practice the initial solutions are far from the optimum, the convergence rate is low, and the solutions typically require some post-processing.
Search-based approaches can remedy those shortcomings by connecting precomputed trajectories, so-called \emph{motion primitives}, using A* or related graph search algorithms.
Yet, the seemingly strong theoretical guarantees only hold up to the selected discretization of the state space and the precomputed motions.
Moreover, scaling this approach to higher dimensions has proved difficult and requires careful, frequently hand-crafted design of the motion primitives.
This curse of dimensionality can be overcome by optimization-based planners, which scale polynomially rather than exponentially with the number of state dimensions.
However, these planners are, in the general case, only locally optimal and thus require a good initial guess both for the trajectory and time horizon.

\begin{figure}
    \centering
    \includegraphics[width=\linewidth]{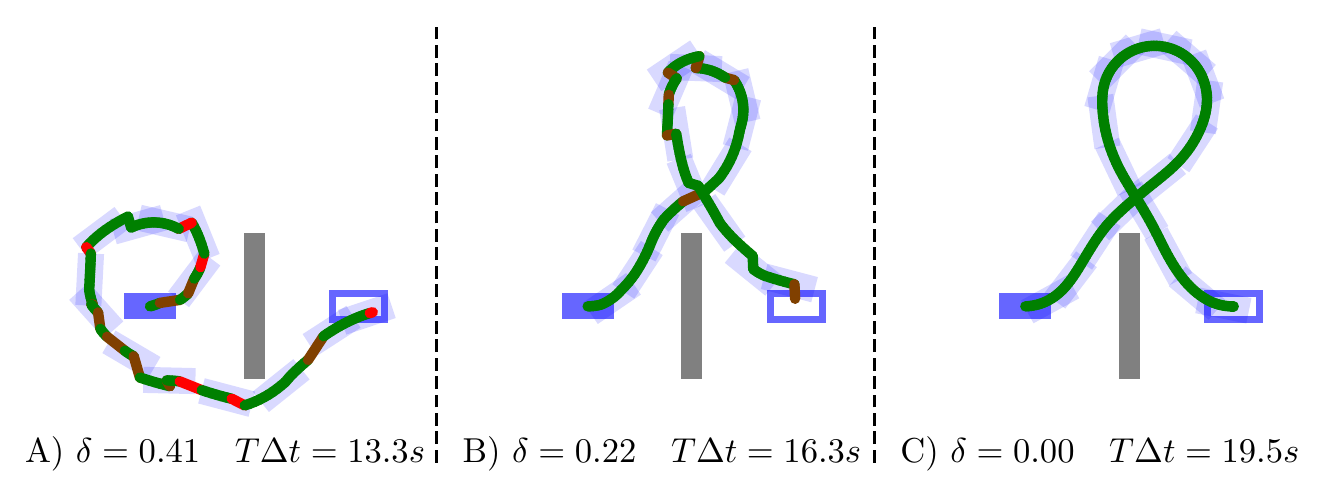}
    \caption{
    A kinodynamic planning problem, where a plane-like 2D robot with a malfunctioning rudder (no sharp right turns) has to move from the left configuration (filled blue) to the right configuration (blue outline) in minimum time.
    \emph{A)} An initial solution by ``stitching'' motion primitives with bounded discontinuities (magnitude indicated by hue of red color). \emph{B)} A refined solution using more primitives and a lower discontinuity bound $\delta$. \emph{C)} Final trajectory computed with optimization using B) as initial guess.
    }
    \label{fig:overview}
\end{figure}

In this paper, we present a new approach for kinodynamic motion planning of mobile robots that combines key ideas and strengths of the aforementioned previous methods.
We rely on a graph-search method, because it provides a theoretically grounded exploration/exploitation tradeoff, but we want to remedy its primary shortcoming of a predefined discretization, similar to sampling-based planning.
The naive approach of simply increasing the number of primitives is intractable, due to the resulting infinite number of states and infinite branching factor.
We solve this challenge with a combination of bounded-discontinuity search with nonlinear optimization.
Introducing the discontinuity makes the search tractable: we can reuse the primitives and have a finite number of states to expand.
While the resulting trajectory is not feasible, it can be used as initial guess of trajectory optimization that locally repairs the discontinuous trajectory into a valid trajectory.
We execute search and optimization in an iterative fashion, where the value of the discontinuity bound decreases in every iteration.
For large bounds, the search is very fast, but the optimizer might fail to find a valid solution.
For very small bounds, the search requires a longer runtime, but the optimizer has an excellent initial guess.
This combination results in an efficient anytime planner with probabilistic optimality guarantees.

More specifically, our first contribution is the introduction of \emph{kMP-db-A*}, a new \underline{k}inodynamic \underline{m}otion \underline{p}lanner that combines a novel search algorithm, \underline{d}iscontinuity-\underline{b}ounded \underline{A*} (db-A*), and trajectory optimization in an iterative fashion.
Db-A* generalizes A* with ideas from sampling-based planning to obtain solution trajectories that may have discontinuities up to a user-specified bound.
Our second contribution is the, to our knowledge, first benchmark that compares the three major kinodynamic motion planning techniques on the same problem instances with the identical objective of computing time-optimal trajectories.
While we focus in our evaluation on the challenging case of time-optimality, our approach supports arbitrary cost functions.

\section{Problem Description}

We consider a robot with state $\vx = [\vx^t, \vx^r] \in \sX \subset \mathbb R^{d_w} \times \mathbb R^{d_x-d_w}$, where the first $d_w$ dimensions indicate the translation in the workspace ($d_w\in\{2,3\}$) of the robot and the remaining $d_x - d_w$ dimensions may contain orientation or derivatives.
The robot can be actuated by controlling actions $\vu \in \sU \subset \mathbb R^{d_u}$.
We consider dynamics that are \emph{translation invariant}, with
\begin{equation}
    \label{eq:dynamics}
    \dot \vx = \vf(\vx^r, \vu),
\end{equation}
where $\vf$ only depends on $\vx^r$ and not on $\vx^t$.
In order to employ gradient-based optimization, we assume that we can compute the Jacobian of $\vf$ with respect to $\vx^r$ and $\vu$.

The robot is operating in a workspace $\sW \subseteq \mathbb R^{d_w}$ that indicates the free space for safe navigation.
The free state space then becomes $\sX_{\mathrm{free}} = \{\vx = [\vx^t, \vx^r] \in \sX | \vx^t \in \sW\}$.

Almost all generic kinodynamic motion planners assume a discrete-time formulation with zero-order hold, i.e., the applied action remains constant during a timestep.
We can then frame the dynamics \cref{eq:dynamics} as
\begin{equation}
    \label{eq:dynamics_discrete}
    \vx_{k+1} \approx \step(\vx_k, \vu_k) \equiv \vx_k + \vf(\vx^r_k, \vu_k)\Delta t,
\end{equation}
using a small timestep $\Delta t$ so that the Euler approximation holds sufficiently well.

Let $\seqX = \langle \vx_0, \vx_1, \ldots, \vx_T \rangle$ be a sequence of states sampled at times $0, \Delta t, \dots, T \Delta t$ and $\seqU = \langle \vu_0, \vu_1, \ldots, \vu_{T-1} \rangle$ be a sequence of actions applied to the system for times $[0,\Delta t), [\Delta t, 2\Delta t), \ldots, [(T-1)\Delta t, T\Delta t)$.
Then our goal of moving the robot from its start state to a goal state can be framed as the following optimization problem:
\begin{align}
    &\min_{\seqU,\seqX,T} J(\seqU,\seqX, T) \label{eq:motion-planning}
    \\
    &\text{\noindent s.t.}\begin{cases}
    \vx_{k+1} = \step(\vx_k, \vu_k) & \forall k \in \{0,\ldots,T-1\}\\
    \vu_k \in \sU & \forall k \in \{0,\ldots,T-1\}\\
    \vx_k \in \sX_{\mathrm{free}}  & \forall k \in \{0,\ldots,T\} \\
    \vx_0 = \vx_s;\,\,\vx_T = \vx_f, &
    \end{cases} \nonumber
\end{align}
where $\vx_s \in \sX$ is the start state and $\vx_f \in \sX$ is the goal state.
The objective function $J$ is application specific; we will focus on time-optimal trajectories, i.e., $J(\seqU,\seqX, T) = T\Delta t$.

\begin{example}
Consider a unicycle robot with state $\vx = [x, y, \theta] \in \sX = SE(2)\subset \mathbb R^{2} \times \mathbb R^{1}$, i.e., $x, y$ are the position and $\theta$ is the orientation.
The actions are $\vu = [v, \omega ]\in \sU \subset \mathbb R^{2}$, i.e., the speed and angular velocity can be controlled directly.
The dynamics are translation invariant: $\dot\vx = [v \cos \theta, v \sin \theta, \omega]$.
The choice of $\sU$ can make this low-dimensional problem challenging to solve.
For example, \cref{fig:overview} shows a plane-like case (positive minimum speed, i.e., $0.25 \leq v \leq \SI{0.5}{m/s}$) with a malfunctioning rudder (asymmetric angular speed, i.e., $-0.25 \leq \omega \leq \SI{0.5}{rad/s}$).
\end{example}

\section{Related Work}

There are several conceptually different algorithmic approaches to solving kinodynamic motion planning problems.

\textbf{Search-based} approaches rely on existing methods for discrete path planning, such as A* and variants.
The common approach is to generate short trajectories (\emph{motion primitives}) using a state lattice (i.e., pre-specified discrete state components)~\cite{pivtoraikoKinodynamicMotionPlanning2011}.
Each primitive starts and ends at a grid cell and swept cells can be precomputed for efficient collision checking.
Once motion primitives are computed, existing algorithms such as A* or the anytime variant ARA* can be employed without modification.

These methods can solve \cref{eq:motion-planning} if $\vx_s$ and $\vx_f$ fall within the chosen lattice and retain very strong theoretical guarantees on both optimality and completeness with respect to the chosen primitives.
The major challenge is to select and compute good motion primitives, especially for high-dimensional systems~\cite{PivtoraikoThesis,dispersionMinimizingPrimitives}.

\textbf{Sampling-based} approaches build a tree $\mathcal T$ rooted at the start state $\vx_s$.
During tree expansion, i) a random state $\vx_{\mathrm{rand}}$ in the state space is sampled, ii) an existing state $\vx_{\mathrm{expand}} \in \mathcal T$ is selected, and iii) a new state $\vx_{\mathrm{new}}$ is added with a motion that starts at $\vx_{\mathrm{expand}}$ and moves towards $\vx_{\mathrm{rand}}$.
The classic version of this approach, \emph{kinodynamic RRT}~\cite{kinodynamicRRT}, is probabilistically complete when using the correct variant~\cite{kunzKinodynamicRRTsFixed2015}.
Asymptotic optimality can be achieved when planning in state-cost space (\emph{AO-RRT})~\cite{AO-RRT,AO-RRT-Analysis, ST-RRT-Star} or by computing a sparse tree (\emph{SST*})~\cite{SSTstar}.
These methods rely on a distance function $d: \sX \times \sX \to \mathbb R$ and often use fast nearest neighbor data structures such as k-d trees for efficiency.
The mentioned algorithms work without solving a two-point boundary value problem, which is computationally expensive.

Sampling-based approaches are designed to explore the state space as fast as possible and typically do not use a heuristic function, unlike search-based methods.
The exploration/exploitation tradeoff is typically controlled by using goal-biasing instead.
These approaches cannot solve \cref{eq:motion-planning} directly, because the probability to reach $\vx_f$ by sampling is zero. Instead, the goal constraint is typically reformulated to $\vx_T \in \sX_f$ using a goal region $\sX_f$ rather than a goal state $\vx_f$.

\textbf{Optimization-based} approaches locally optimize an initial trajectory using the gradients of $J$, unlike the previous gradient-free methods. Dynamics, collision avoidance, goal constraints, and control limits are modeled with piece-wise differentiable functions. 
In \emph{CHOMP}, Hamiltonian Monte Carlo is used to perturb local solutions~\cite{CHOMP}, while \emph{TrajOpt}~\cite{TrajOpt} and \emph{GuSTO}~\cite{GuSTO,malyutaConvexOptimizationTrajectory2021} rely on sequential convex programming (SCP). Trajectories can also be computed with optimal control solvers that rely on Differential Dynamic Programming~
\cite{Crocoddyl} or extend the linear quadratic regulator to nonlinear systems ~\cite{li2004iterative}.

Both \emph{STOMP}~\cite{STOMP} and \emph{KOMO}~\cite{KOMO} use only the (geometric) state sequence $\seqX^t$ as decision variables and support kinodynamic systems via constraints.
All optimization-based approaches require an initial guess as a starting trajectory, but this guess does not need to be kinodynamically feasible.

These approaches can solve \cref{eq:motion-planning} directly for a differentiable $J$ and given number of timesteps $T$.
The observed solution quality is significantly higher (e.g., in terms of smoothness) compared to sampling-based or search-based approaches.
Moreover, optimization-based approaches do not suffer from the curse of dimensionality directly, although higher dimensions might result in more local optima.

\textbf{Hybrid} approaches combine two or more ideas.
One can combine search and optimization~\cite{natarajanInterleavingGraphSearch2021}, search and sampling~\cite{sakcakSamplingbasedOptimalKinodynamic2019,DIRT,shomeAsymptoticallyOptimalKinodynamic2021b}, or combine sampling and optimization~\cite{RABITstar}.
For some dynamical systems, using insights from control theory for the motion planning can also be beneficial \cite{LQR-RRTstar, kino-RRTstar, R3T}, but requires domain knowledge.
Motion planning can also benefit from using machine learning for computational efficiency~\cite{RL-RRT,L-SBMP}.

Our approach relies on the fact that the dynamics are translation-invariant for many mobile robots.
The most similar related works are a method that reuses edges within a sampling-based planning framework~\cite{shomeAsymptoticallyOptimalKinodynamic2021b} and a search-based approach that has an enhanced duplicate detection~\cite{duEscapingLocalMinima2019}.
Unlike those works we also include trajectory optimization and reuse precomputed and online computed motion primitives for fast convergence in practice.
Moreover, our discontinuity bound $\delta$ is not a fixed user-specified value but converges to 0, which allows us to provide stronger theoretical guarantees.

\section{Approach}

Our general approach is shown in \cref{alg:overview}.
We assume that we have access to a set of \emph{motion primitives}, which are valid trajectories according to our dynamics.
\begin{definition}
    \label{definition:motionPrimitive}
    A \emph{motion primitive} is a tuple $\langle \seqX, \seqU, T, c \rangle$ with
    \begin{equation}
    \begin{aligned}
        \seqX &= \langle \vx_0, \ldots, \vx_T\rangle & \vx_0^t = \vzero,\,\, \vx_k \in \sX\\
        \seqU &= \langle \vu_0, \ldots, \vu_{T-1}\rangle & \vu_k \in \sU\\
        \vx_{k+1} &= \step(\vx_k, \vu_k) & \forall k \in \{0,\ldots,T-1\}\\
        c &= J(\seqU, \seqX, T).
    \end{aligned}
    \end{equation}
\end{definition}
Typically, at least some of these are computed offline (e.g., using an optimization-based kinodynamic planner), while others can also be generated online.
There is no requirement on the optimality of each motion primitive, although \emph{optimal motion primitives}, where $J$ is minimized for the respective start and goal states, are beneficial in our setting.

Our kinodynamic planning approach iteratively improves the solution.
In every iteration, the following steps are executed: i) the set of used motion primitives grows and the bound that limits the maximum magnitude of discontinuous jumps that a solution may have is computed (\crefrange{alg:overview:sM}{alg:overview:delta});
ii) our discrete planner, db-A*, computes an initial solution that may include a bounded violation of some constraints (\cref{alg:overview:dbAstar});
iii) the result of db-A* warm-starts an optimization-based formulation (\cref{alg:overview:opt});
and iv) additional motion primitives are extracted from the optimization (\cref{alg:overview:extract}).

\begin{algorithm}[t]
    \caption{kMP-db-A*: Kinodynamic Motion Planning with db-A*}
    \label{alg:overview}
    \DontPrintSemicolon

    \SetKwFunction{AddPrimitives}{AddPrimitives}
    \SetKwFunction{ExtractPrimitives}{ExtractPrimitives}
    \SetKwFunction{ComputeDelta}{ComputeDelta}
    \SetKwFunction{DiscontinuityBoundedAstar}{db-A*}
    \SetKwFunction{Optimization}{Optimization}
    \SetKwFunction{Report}{Report}
    $\sM \leftarrow \emptyset$ \Comment*{Set of motion primitives}
    $c_{\mathrm{max}} \leftarrow \infty$ \Comment*{Solution cost bound}
    \For{$n=1,2,\ldots$}{
        $\sM \leftarrow \sM \cup \AddPrimitives()$\label{alg:overview:sM}\;
        $\delta \leftarrow \ComputeDelta(\sM)$\label{alg:overview:delta}\;
        $\seqX_d, \seqU_d, T_d \leftarrow$ \DiscontinuityBoundedAstar{$\vx_s, \vx_f, \sX_{\mathrm{free}}, \sM, \delta, c_{\mathrm{max}}$}\label{alg:overview:dbAstar}\;
        \If{$\seqX_d, \seqU_d$ successfully computed}{
            $\seqX, \seqU, T \leftarrow$ \Optimization{$\seqX_d, \seqU_d, T_d, c_{\mathrm{max}}$}\label{alg:overview:opt}\;
            \If{$\seqX, \seqU$ successfully computed}{
                \Report{$\seqX, \seqU, T$} \Comment*{New solution found}
                $c_{\mathrm{max}} \leftarrow \min(c_{\mathrm{max}}, J(\seqX, \seqU, T))$ \Comment*{cost bound}
            }
            $\sM \leftarrow \sM \cup \ExtractPrimitives(\seqX, \seqU)\label{alg:overview:extract}$\;
        }
    }
\end{algorithm}

\subsection{db-A*: Discontinuity-bounded A*}

In the following, we rely on a user-specified \emph{metric} $d: \sX \times \sX \to \mathbb R$, which measures the distance between two states.
This is analogous to sampling-based planners, and we assume that $\langle \sX, d\rangle$ is a metric space in order to use efficient nearest neighbor data structures, such as k-d trees.

\begin{definition}
\label{definition:discontinuityBounded}
Sequences $\seqX = \langle \vx_0, \ldots, \vx_T\rangle$, $\seqU = \langle \vu_0, \ldots, \vu_{T-1}\rangle$ are \emph{$\delta$-discontinuity-bounded} solutions to \cref{eq:motion-planning} iff the following conditions hold:
\begin{subequations}
\begin{align}
    d(\vx_{k+1}, \step(\vx_k, \vu_k)) \leq \delta \quad k \in \{0,\ldots, T-1\} \label{eq:disBound:dynamics}\\
    \vu_k \in \sU \quad \forall k \in \{0,\ldots, T-1\} \label{eq:disBound:U}\\
    \vx_k \in \sX_{\mathrm{free}} \quad \forall k \in \{0,\ldots, T\} \label{eq:disBound:Xfree}\\
    d(\vx_0, \vx_s) \leq \delta \label{eq:disBound:xs}\\
    d(\vx_T, \vx_f) \leq \delta. \label{eq:disBound:xf}
\end{align}
\end{subequations}

\end{definition}

Intuitively, \cref{definition:discontinuityBounded} enforces that the sequences connect the start and goal states with a bounded error $\delta$ in the dynamics, which corresponds to ``stitching'' primitives together. By the definition of a metric space, $\seqX$ and $\seqU$ fulfill all constraints of \cref{eq:motion-planning} if $\delta = 0$.

Our approach to compute such sequences is \emph{discontinuity-bounded A*} (db-A*), see \cref{alg:dbAstar}.
Db-A* is, like A*, an informed search that relies on a \emph{heuristic} $h: \sX \to \mathbb R$ to explore an implicitly defined directed graph efficiently.
Nodes in the graph represent states and an edge between nodes indicates a $\delta$-bounded motion that connects the states.

The algorithm keeps track of nodes to explore using a \emph{priority queue}, which is sorted by the lowest $f(\vx)=g(\vx)+h(\vx)$ value, where $g(\vx)$ is the cost-to-come.
The overall structure is the same as in A*: The OPEN priority queue $\mathcal O$ is initialized with the start state (\cref{alg:dbAstar:Oinit}), the current node $n$ is the removed first element of $\mathcal O$ (\cref{alg:dbAstar:Opop}), and that node is expanded in order to compute valid (collision-free) neighbors (\crefrange{alg:dbAstar:expand1}{alg:dbAstar:expand2}).
Newly found nodes are added directly to $\mathcal O$ (\cref{alg:dbAstar:Oadd}), while previously found nodes are updated if the solution cost is reduced (\crefrange{alg:dbAstar:Oupdate1}{alg:dbAstar:Oupdate2}).

Unlike A*, we consider two states to be identical for nonzero $\delta$ values.
The major changes of db-A* from A* are highlighted in \cref{alg:dbAstar}.
We use the notation $\vx \oplus m$ to indicate that a motion $m$ is applied to state $\vx$; that is, we shift $m$ by the translational part of $\vx$.
For efficient search, we adopt two k-d trees (rather than a hashmap in A*).
Namely, we use $\mathcal T_m$ (\cref{alg:dbAstar:Tm}) to index the start states of all provided motion primitives, which can be done once at the beginning.
In order to reuse the same distance metric $d$, we use the translation-invariance property and set $\vx^t$ of a given state $\vx$ to $\vzero$.
This data structure allows us to efficiently find suitable motions extending from a given state (\cref{alg:dbAstar:Mprime}).
The second k-d tree $\mathcal T_n$ (\cref{alg:dbAstar:Tn}) contains the states of all explored nodes and grows dynamically (\cref{alg:dbAstar:TnGrow}).
It is used to find nearby previously explored states (\cref{alg:dbAstar:Nprime}) in order to limit the graph size and enable rewiring.
The discontinuity with a magnitude of up to $\delta$ may occur in two cases.
First, when we select suitable motion primitives for expansion (\cref{alg:dbAstar:Mprime}) and second, when we prune a potential new node in favor of already existing states (\cref{alg:dbAstar:Nprime}).
The tradeoff between the two can be selected by a user-specified parameter $\alpha \in (0, 1)$.
For most search-based algorithms, collision checking is achieved using a binary occupancy grid, which makes the choice of the grid size a critical decision.
Instead, we rely on broadphase collision checking.
The required data structures can be efficiently precomputed for the environment and each motion primitive.
For the collision check in \cref{alg:dbAstar:collision}, we only need to shift the data structure for the selected motion primitive, before executing the broadphase collision check.

\begin{algorithm}[t]
    \caption{db-A*}
    \label{alg:dbAstar}
    \DontPrintSemicolon

    \SetKwFunction{NearestNeighborInit}{NearestNeighborInit}
    \SetKwFunction{NearestNeighborQuery}{NearestNeighborQuery}
    \SetKwFunction{NearestNeighborAdd}{NearestNeighborAdd}
    \SetKwFunction{PriorityQueuePop}{PriorityQueuePop}
    \SetKwFunction{PriorityQueueInsert}{PriorityQueueInsert}
    \SetKwFunction{PriorityQueueUpdate}{PriorityQueueUpdate}

    \KwData{$\vx_s, \vx_f, \sX_{\mathrm{free}}, \sM, \delta, c_{\mathrm{max}}$}
    \KwResult{$\seqX_d, \seqU_d$ or Infeasible}
    
    \tikzmk{A}$\mathcal T_m \leftarrow \NearestNeighborInit(\mathcal M)$ \Comment*{Use start states of motions (excl. position)} \label{alg:dbAstar:Tm}
    $\mathcal T_n \leftarrow \NearestNeighborInit(\{\vx_s\})$ \Comment*{capture explored vertices (incl. position)}\label{alg:dbAstar:Tn}\marklineFour{-10pt}
    $\mathcal O \leftarrow \{Node(\vx: \vx_s, g: 0, h: h(\vx_s), p: None, a: None) \}$ \label{alg:dbAstar:Oinit} \Comment*{Initialize open priority queue}
    \While{$|\mathcal O| > 0$}{
        $n \leftarrow \PriorityQueuePop(\mathcal O)$ \Comment*{Lowest f-value} \label{alg:dbAstar:Opop}
        \tikzmk{A} \If{$d(n.\vx, \vx_f) \leq \delta$\marklineOne{-17pt} \label{alg:dbAstar:sol_cond}}{
            \Return $\seqX_d, \seqU_d, T_d$ \Comment*{Lowest f-value} \label{alg:dbAstar:sol}
        }
        \tikzmk{A}\Comment{Find applicable motion primitives with discontinuity up to $\alpha \delta$}
        $\mathcal M' \leftarrow \NearestNeighborQuery(\mathcal T_m, n.\vx^r, \alpha\delta)$\marklineThree{-17pt} \label{alg:dbAstar:Mprime}\;
        \ForEach{$m\in \mathcal M'$ \label{alg:dbAstar:expand1}}{
        \If{$n.\vx\oplus m \notin \sX_{\mathrm{free}}$ \label{alg:dbAstar:collision}}{
            \Continue \label{alg:dbAstar:expand2} \Comment*{entire motion is not collision-free}
        }
        $g_t \leftarrow n.g + cost(m)$ \Comment*{tentative g score for this action}
        \tikzmk{A}\Comment{find already explored nodes within $(1-\alpha)\delta$}
        $\mathcal N' \leftarrow \NearestNeighborQuery(\mathcal T_n, n.\vx\oplus m, (1-\alpha)\delta)$\marklineThree{-24pt} \label{alg:dbAstar:Nprime}\;
        \eIf{$\mathcal N' = \emptyset$}{
            $\PriorityQueueInsert(\mathcal O, Node(\vx: n.\vx \oplus m, g: g_t, h: h(n.\vx\oplus m), p: n, a: m))$ \label{alg:dbAstar:Oadd}\;
            \tikzmk{A}$\NearestNeighborAdd(\mathcal T_n, n.\vx \oplus m)$\marklineOne{-32pt} \label{alg:dbAstar:TnGrow}\;
        }{
            \ForEach{$n'\in \mathcal N'$}{
                \If{$g_t < n'.g$ \Comment*{This motion is better than a known motion} \label{alg:dbAstar:Oupdate1}}{
                    $n'.g = g_t$ \Comment*{Update cost}
                    $n'.p = n$ \Comment*{Update parent}
                    $n'.a = m$ \Comment*{Update action}
                    $\PriorityQueueUpdate(\mathcal O, n')$ \label{alg:dbAstar:Oupdate2}
                }
            }
        }
        }
    }
    \Return Infeasible
\end{algorithm}

We now discuss the theoretical properties of db-A*.

\begin{theorem}
    Sequences $\seqX$ and $\seqU$ returned by db-A* (\cref{alg:dbAstar}) are a $\delta$-discontinuity-bounded solution to the given motion planning problem.
\end{theorem}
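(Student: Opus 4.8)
The plan is to reconstruct the returned solution explicitly from the search tree and then verify each of the five conditions of \cref{definition:discontinuityBounded} in turn. Concretely, when db-A* returns (\cref{alg:dbAstar:sol}) after popping a node $n_L$ with $d(n_L.\vx, \vx_f) \le \delta$ (\cref{alg:dbAstar:sol_cond}), I would follow the parent pointers back to the start node $n_0$ (whose state is $\vx_s$), collecting the motion primitives $m_i = n_i.a$ labeling each edge. The sequences $\seqX_d, \seqU_d$ are the concatenation of these primitives, each shifted by the translational part of the state at which it was applied (the $\oplus$ operation), with the final waypoint taken to be the stored node state $n_L.\vx$. For the three ``interior'' conditions this is immediate: each $m_i$ already satisfies the discrete dynamics and $\vu_k \in \sU$ by \cref{definition:motionPrimitive}, and because $\vf$ is translation invariant the shift by $\vx^t$ preserves the $\step$ relation, so \cref{eq:disBound:dynamics,eq:disBound:U} hold exactly within each primitive; the collision check in \cref{alg:dbAstar:collision} guarantees every state of every applied shifted primitive lies in $\sX_{\mathrm{free}}$, giving \cref{eq:disBound:Xfree}.

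The crux is the discontinuity at the junction between consecutive primitives $m_i$ and $m_{i+1}$, the only place \cref{eq:disBound:dynamics} can be violated. Here two independent error sources combine. First, when $m_i$ was applied, its end state $\hat e_i = n_{i-1}.\vx \oplus m_i$ was identified (via \cref{alg:dbAstar:Nprime}) with an already-explored node $n_i$ within radius $(1-\alpha)\delta$, so $d(n_i.\vx, \hat e_i) \le (1-\alpha)\delta$; subsequent expansion attaches $m_{i+1}$ to the stored state $n_i.\vx$, not to the raw end $\hat e_i$. Second, $m_{i+1}$ was retrieved from $\mathcal T_m$ as a primitive whose start is within $\alpha\delta$ of $n_i.\vx^r$ (\cref{alg:dbAstar:Mprime}), so its shifted start $\hat s_{i+1}$ satisfies $d(\hat s_{i+1}, n_i.\vx) \le \alpha\delta$ --- here I would invoke translation invariance of the metric $d$, which is exactly what lets the nearest-neighbor queries on position-zeroed states be reused. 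A triangle inequality through the node state then yields $d(\hat s_{i+1}, \hat e_i) \le d(\hat s_{i+1}, n_i.\vx) + d(n_i.\vx, \hat e_i) \le \alpha\delta + (1-\alpha)\delta = \delta$, i.e., the single discontinuous step bridging the two primitives respects the bound.

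It remains to check the boundary conditions. For the start, the first primitive $m_1$ is attached to $n_0.\vx = \vx_s$ via \cref{alg:dbAstar:Mprime}, so its shifted start is within $\alpha\delta \le \delta$ of $\vx_s$, giving \cref{eq:disBound:xs}; for the goal, taking $\vx_T = n_L.\vx$ makes \cref{eq:disBound:xf} hold directly by the termination test $d(n_L.\vx, \vx_f) \le \delta$, while the last transition --- the snap from $\hat e_L$ onto $n_L.\vx$ --- contributes a discontinuity of at most $(1-\alpha)\delta \le \delta$, again consistent with \cref{eq:disBound:dynamics}. I expect the main obstacle to be precisely this junction bookkeeping: one must verify that rewiring (\crefrange{alg:dbAstar:Oupdate1}{alg:dbAstar:Oupdate2}) changes only $g$, $p$, and $a$ but never the stored state $n_i.\vx$, so that the two error sources genuinely meet at the same node state and combine to at most $\delta$ rather than accumulating, and that the goal waypoint is taken to be the node state against which the termination test is evaluated rather than the raw primitive endpoint.
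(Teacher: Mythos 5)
Your proposal is correct and follows essentially the same route as the paper's proof: verify each of the five conditions of \cref{definition:discontinuityBounded} separately, with the key step being the triangle inequality that combines the $\alpha\delta$ motion-query radius (\cref{alg:dbAstar:Mprime}) with the $(1-\alpha)\delta$ node-pruning radius (\cref{alg:dbAstar:Nprime}) to bound each junction discontinuity by $\delta$. Your explicit reconstruction of the path from parent pointers and your remark that rewiring must not alter the stored node state are more careful than the paper's terse treatment, but they elaborate the same argument rather than replace it.
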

\begin{proof}
    \Cref{alg:dbAstar} only returns a sequence in \cref{alg:dbAstar:sol}. Due to the condition in \cref{alg:dbAstar:sol_cond}, \cref{eq:disBound:xf} holds.

    By \cref{definition:motionPrimitive}, we have $d(\vx_{k+1}, \step(\vx_k, \vu_k)) = 0 \leq \delta$, $\vu_k \in \sU$, and $\vx_k \in \sX$ for each motion primitive $m\in \mathcal M$.
    Thus, \cref{eq:disBound:U} holds.
    During the search, we expand motions whose start states are at most $\alpha\delta$ away from the current state $n.\vx$ (\cref{alg:dbAstar:Mprime}).
    There are two cases.
    First, the motion corresponds to an edge leaving from the current state $n.\vx$ (\cref{alg:dbAstar:Oadd}), in which case we have $d(m^0, n.\vx) \leq \alpha\delta$, where $m^0$ is the (translated) first state of motion $m$.
    Second, the motion becomes an edge leaving from some neighbor state $n'.\vx$ that is at most $(1-\alpha)\delta$ away from $n.\vx$ (\cref{alg:dbAstar:Nprime}), in which case we have $d(m^0, n'.\vx) \leq \alpha\delta + (1-\alpha)\delta = \delta$, using the triangle inequality of our metric space.
    Thus, \cref{eq:disBound:dynamics} holds for all edges.

    We already know that $\vx_k \in \sX$. Motions are only used as edges, if the entire motion is in $\sX_{\mathrm{free}}$ (\cref{alg:dbAstar:collision}), thus \cref{eq:disBound:Xfree} holds.
    Finally, \cref{eq:disBound:xs} holds because $\mathcal O$ is initialized with $\vx_s$ (\cref{alg:dbAstar:Oinit}) and \cref{eq:disBound:dynamics} holds.
\end{proof}

\begin{remark}
    \label{th:incomplete}
    Db-A* is incomplete and suboptimal if $\delta > 0$.
\end{remark}
\begin{proof}
    Consider an example where a robot has to move through a narrow door to navigate to an adjacent room.
    Even if a $\delta$-discontinuity-bounded solution for the problem exists, db-A* may not find it, because motions are added in a random order and only if no other node is within $(1-\alpha)\delta$ (\cref{alg:dbAstar:Nprime}).
    Since db-A* is incomplete, it cannot guarantee that no better $\delta$-discontinuity-bounded solution exists, once it finds one.
\end{proof}

We note that \cref{th:incomplete} uses a very strong definition of completeness in continuous state space.
Other possible definitions include \emph{$\delta$-robust completeness}~\cite{SSTstar}; we leave the analysis regarding that property to future work.
For the purpose of this paper, it is important to recognize that the strong properties of A* hold in the limit, i.e., as $\delta \to 0$. 

\subsection{Kinodynamic Optimization}
\label{sec:approach:opt}
For the \texttt{Optimization} subroutine, we rely on \emph{$k$-Order Motion Optimization} (KOMO)~\cite{KOMO}, which solves the following optimization problem:
\begin{align}
    &\min_{\seqX} \sum_{l=1}^T \hat{J}(\vx_{l-k:l}) \label{eq:komo}
    \\
    &\text{\noindent s.t.}\begin{cases}
    \vg_l(\vx_{l-k:l}) \leq \vzero & \forall l \in \{1,\ldots,T\}\\
    \vh_l(\vx_{l-k:l}) = \vzero & \forall l \in \{1,\ldots,T\}
    \end{cases}. \nonumber
\end{align}
Here, $\vx_{l-k:l}$ denotes the sequence $\vx_{l-k}, \vx_{l-k+1}, \ldots, \vx_{l}$ and the inequality constraints $\vg_l$ and equality constraints $\vh_l$ only depend on the current and up to $k$ prior states.
This $k$-order Markov assumption allows us to solve the nonlinear optimization problem efficiently e.g., using the augmented Lagrangian method, because $k$ is typically small (1 to 3).

When using the Euler approximation in \cref{eq:dynamics_discrete}, we can transform \cref{eq:motion-planning} for a given $T$ into \cref{eq:komo} by encoding the dynamics, start, and goal constraints using $\vh_l$ and the action and state constraints into $\vg_l$.
Since $\seqU$ is not a decision variable in this formulation, the dynamics constraint has to be encoded by using state constraints or by augmenting the state space.
We note that if $T$ and $\Delta t$ are fixed and $J=T \Delta t$, we can use any $\hat{J}$ to optimize in the nullspace of $J$.
This allows us to include arbitrary regularization terms (in our case, smoothness)  to guide the optimization and improve the convergence and success rate of the optimizer.

Some optimization methods may refine the given $T_d$ in \cref{alg:overview:opt} of \cref{alg:overview} either by adding $\Delta t$ as an optimization variable~\cite{malyutaConvexOptimizationTrajectory2021, ponton2021efficient}
(which introduces additional nonlinearities), or by applying a linear search over multiple potential values of $T$ that are around $T_d$, e.g., $T\in \langle 0.8 T_d, T_d, 1.2 T_d\rangle$.
We use the latter approach for \cref{alg:overview}.

When no estimate of $T$ is available, we can use a linear search over $T$.
For some dynamics, e.g., differentially-flat systems, it is also possible to use a modified binary search, where the first exponential search identifies an upper bound and the following binary search finds the optimal $T$.
We use the latter approach for our baseline.

\subsection{Motion Primitive Generation}

Instead of sampling control sequences at random, we solve two-point boundary value problems with random start and goal configurations in free space with nonlinear optimization, which results in a superior primitive distribution.
Specifically, we generate motion primitives offline using the following steps.
First, random sampling of a start and goal configuration in free space;
second, solving \cref{eq:komo} using linear search over $T$; and third, splitting the resulting motion into multiple pieces of a desired length.
We sort the primitives using an iterative greedy method that approximately minimizes the dispersion.
Let $\sM$ be the set of all motions, $\sM_s$ be the set of sorted motions, and $\sM_r = \sM \setminus \sM_s$ be the set of remaining motions.
We initialize $\sM_s = \{\argmax_{m\in\sM} d(m^0, m^f) \}$, where $m^0$ refers to the initial state of the motion and $m^f$ to the final state of the motion.
Then, we add an element to $\sM_s$ in each iteration selected by
\begin{equation}
    \argmax_{m_r\in \sM_r} \left(\min_{m_s\in\sM_s} d(m_r^0, m_s^0) + \min_{m_s\in\sM_s} d(m_r^f, m_s^f)\right).
\end{equation}
Thus, we pick the motion in each iteration that maximizes the minimum distance to other, already picked motions.

For \texttt{AddPrimitives} we add motions from the precomputed sequence $\sM_s$.
Additional motions can be generated online using the same procedure.

Instead of letting users manually specify $\delta$, we use the automatic procedure \texttt{ComputeDelta}, which estimates $\delta$ given a desired branching factor $b_d$.
First, we initialize a k-d tree $\mathcal T_m$ of all motions, as in \cref{alg:dbAstar:Tm} of \cref{alg:dbAstar}.
Second, we randomly sample a state $\vx_{\mathrm{rand}}$.
Third, we use $\mathcal T_m$ to find the $b_d$-closest motions that could be applied from $\vx_{\mathrm{rand}}$.
Fourth, we record the distance $\delta_r = \max d(m, \vx_{\mathrm{rand}})$, where $m$ is one of the $b_d$-closest motions.
The estimated value of $\delta$ is the average over multiple $\delta_r$ values.
This procedure reduces $\delta$ as the number of motion primitives increases in expectation and is easy to tune at the same time.

Motion primitives can also be extracted online in \cref{alg:overview}.
The \texttt{ExtractPrimitives} procedure uses the output of the optimization regardless of the constraint satisfaction and works as follows.
First, intervals of valid sub-trajectories are computed by checking if all the constraints are fulfilled.
Longer intervals can be split up as in the offline computation.
The resulting primitives can be particularly useful for the planning problem at hand, because they are computed using the full knowledge of the environment.

\subsection{Properties}

We conjecture that the approach in \cref{alg:overview} will eventually compute the optimal solution, because as the number of iterations $n$ increases, we add more primitives $\mathcal M$, which, by definition of \texttt{ComputeDelta}, reduces $\delta$.
Thus, as $n\to\infty$, we have $\delta\to 0$.
For $\delta = 0$, db-A* as described in \cref{alg:dbAstar} becomes regular A*, which is known to be complete and optimal.
The major flaw of this argument is that, in the limit, we also have an infinite number of motion primitives and thus an infinite branching factor.

Formally, we can follow \cite[Th. 3]{AO-RRT} to establish almost surely asymptotic optimality (which also implies probabilistic completeness) under the assumption that we have a non-zero probability of our \texttt{Optimization} method to find a solution if one exists.
This assumption is justified by the fact that the nonlinear trajectory optimization has a region of attraction $\Delta > 0$ and for small $\delta > 0$ our initial guess will fall in this region of attraction, allowing the optimization method to eventually compute a solution if one exists.

\begin{theorem}
    \label{theorem:ao}
    The kMP-db-A* motion planner in \cref{alg:overview} is asymptotically optimal, i.e.
    \begin{equation}
        \lim_{n\to\infty} P(\{ c_n - c^* > \epsilon \}) = 0, \; \forall \epsilon > 0,
    \end{equation}
    where $c_n$ is the best cost in iteration $n$ and $c^*$ is the optimal cost.
\end{theorem}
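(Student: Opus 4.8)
The plan is to follow the asymptotic-optimality argument of \cite[Th.~3]{AO-RRT}, adapting it from the state-cost-space RRT setting to our iterative search-and-optimize scheme. The central reduction is this: it suffices to show that, for every $\epsilon > 0$, the probability that \emph{some} iteration $m \le n$ produces a feasible trajectory of cost at most $c^* + \epsilon$ tends to $1$ as $n \to \infty$. Since $c_n$ is monotonically non-increasing in $n$ (the bound $c_{\mathrm{max}}$ only ever decreases via the $\min$ update in \cref{alg:overview}), this immediately yields $\lim_{n\to\infty} P(\{c_n - c^* > \epsilon\}) = 0$. I would fix an optimal trajectory $\pi^* = (\seqX^*, \seqU^*, T^*)$ attaining cost $c^*$ and argue that, with probability approaching $1$, the algorithm eventually constructs a discontinuity-bounded initial guess close enough to $\pi^*$ for the optimizer to repair into a near-optimal feasible solution.

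First I would make precise the link between small $\delta$ and near-optimal feasible solutions. Relying on the stated region-of-attraction assumption, there exists $\Delta > 0$ such that, whenever \texttt{Optimization} is warm-started with a $\delta$-discontinuity-bounded trajectory (\cref{definition:discontinuityBounded}) with $\delta \le \Delta$ that lies within $\Delta$ of a feasible trajectory, it returns a feasible solution of cost arbitrarily close to that of the nearby feasible trajectory. Combined with a continuity/robustness argument on \cref{eq:motion-planning} — namely that $\pi^*$ can be tracked by a $\delta$-discontinuity-bounded sequence whose cost converges to $c^*$ as $\delta \to 0$ — this gives a threshold $\delta_\epsilon > 0$ such that any $\delta$-bounded tracking of $\pi^*$ with $\delta \le \delta_\epsilon$ is repaired by the optimizer into a feasible trajectory of cost at most $c^* + \epsilon$.

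Next I would control the two quantities that \cref{alg:overview} supplies as $n$ grows. By construction of \texttt{ComputeDelta}, the bound $\delta_n$ decreases in expectation as the primitive set $\sM$ grows, so almost surely there is a (random but finite) iteration beyond which $\delta_n \le \min(\delta_\epsilon, \Delta)$. It then remains to show that, once $\delta_n$ is small enough, db-A* discovers a tracking of $\pi^*$ with non-vanishing probability. Here I would invoke the density argument of \cite[Th.~3]{AO-RRT}: because \texttt{AddPrimitives} draws primitives from random two-point boundary-value problems, for small $\delta$ a sequence of primitives covering a tube around $\pi^*$ is present in $\sM$ with probability bounded below by a positive constant per iteration, and db-A* can stitch them — each junction incurring discontinuity at most $\delta$, by the $\alpha\delta + (1-\alpha)\delta$ triangle-inequality bound — into a feasible edge-path reaching the $\delta$-ball around $\vx_f$. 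Since the per-iteration success probabilities do not vanish, a Borel--Cantelli / geometric-series argument shows that the probability of \emph{never} succeeding through iteration $n$ decays to $0$.

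The main obstacle is precisely the flaw flagged before the statement: in the limit $\sM$ is infinite and so is the branching factor, and, as \cref{th:incomplete} records, db-A* is itself incomplete and may prune a needed intermediate state whenever another lies within $(1-\alpha)\delta$ (\cref{alg:dbAstar:Nprime}). The delicate part is therefore to decouple the shrinking $\delta_n$ from the growing $\sM$ and to show that this pruning does not destroy \emph{every} near-optimal tracking of $\pi^*$ with probability bounded away from $0$; one must argue that enough essentially independent attempts accumulate across iterations — exploiting the fresh randomness injected by \texttt{AddPrimitives} at each $n$ — so that the conditional failure probabilities multiply to $0$, despite the absence of per-iteration completeness. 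Making the region-of-attraction assumption quantitative enough to interface cleanly with the AO-RRT density estimate is the other technical crux.
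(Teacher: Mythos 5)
Your proposal is correct in outline and rests on the same foundations as the paper's proof---both follow \cite[Th.~3]{AO-RRT}, both use the monotonicity $c_{n+1} \leq c_n$, and both ultimately hinge on the assumption that the optimizer succeeds with non-vanishing probability once the discontinuity-bounded guess is good enough---but the probabilistic mechanism you use to close the argument is genuinely different. The paper introduces the suboptimality random variable $S_n = c_n - c^*$ and posits a multiplicative contraction in expectation, $E[S_n \mid s_{n-1}] \leq (1-\omega)s_{n-1}$ for a constant $\omega > 0$; iterating gives $E[S_n] \leq (1-\omega)^{n-1}E[S_1]$, and Markov's inequality then yields $P(S_n > \epsilon) \leq (1-\omega)^{n-1}E[S_1]/\epsilon \to 0$. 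You instead fix $\epsilon$, identify a threshold $\delta_\epsilon$ below which a successful iteration produces cost at most $c^* + \epsilon$, lower-bound the per-iteration success probability past a (random, a.s.\ finite) iteration at which $\delta_n$ has dropped below $\min(\delta_\epsilon,\Delta)$, and multiply conditional failure probabilities in a geometric-series/Borel--Cantelli fashion. Both routes are valid at the level of rigor of the paper. What yours buys is an explicit decomposition of where the real work lies---tracking the optimal trajectory with randomly generated primitives, the region of attraction $\Delta$, and the interaction with the pruning in \cref{alg:dbAstar:Nprime} and the infinite-branching issue flagged around \cref{th:incomplete}---at the price of an $\epsilon$-dependent rate and a needed (quasi-)independence argument across iterations. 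What the paper's route buys is brevity and a clean geometric rate on $E[S_n]$, but it buries every one of those difficulties inside the single unproved inequality $E[S_n \mid s_{n-1}] \leq (1-\omega)s_{n-1}$, which is, if anything, a \emph{stronger} assumption than the one you invoke: it demands a uniform expected relative improvement at every suboptimality level, not merely a non-vanishing success probability at each fixed cost level. Neither argument fully discharges the gap both of you acknowledge, so your proposal is an acceptable---and in places more honest---alternative to the published proof.
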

\begin{proof}
    We closely follow \cite[Th. 3]{AO-RRT}.
    Let $S_1,\ldots,S_n$ be random variables denoting the suboptimality $c_n - c*$.
    In every iteration of \cref{alg:overview} we either reduce the cost if we find a new solution, or we remain with the same cost, i.e., $c_{n+1} \leq c_{n}$.
    Each iteration, we add more motion primitives and thus reduce $\delta$.
    Using the assumption of the non-zero probability for our \texttt{Optimization} method to find a solution, we have
    $E[S_n|s_{n-1}] \leq (1-\omega)s_{n-1}$, i.e., in expectation the solution improves by a constant amount $\omega > 0$ every iteration.
    Then, we have
    \begin{align}
        E[S_n] = \int E[S_n | s_{n-1}] P(s_{n-1}) ds_{n-1}\\
        \leq (1-\omega) \int s_{n-1} P(s_{n-1}) ds_{n-1}\nonumber\\
        = (1-\omega)E[S_{n-1}] = (1-\omega)^{n-1} E[S_1].\nonumber
    \end{align}
    With the Markov inequality we have $P(S_n > \epsilon) \leq E[S_n]/\epsilon = (1-\omega)^{n-1} E[S_1] / \epsilon$, which approaches 0 as $n\to\infty$.
\end{proof}

\section{Experimental Results}

We compare different motion planners, including ours, on the same problem scenarios.
For fair comparison, we share code and data structures as much as possible, use the respective state-of-the-art open-source implementations, and focus on settings where the dynamics and not the collision-checking create challenges.

\subsection{Dynamical Systems}

\textbf{Unicycle ($1^{\text{st}}$ order)} has a 3-dimensional state space $[x,y,\theta] \in SE(2)$ and a 2-dimensional $[v, \omega] \in \sU \subset \mathbb R^2$ control space with dynamics defined in \cite[Eq. (13.18)]{lavallePlanningAlgorithms2006}.
The simplest version (v0) uses bounds $v \in [-0.5, 0.5]~\si{m/s}$ and $\omega \in [-0.5, 0.5]~\si{rad/s}$.
More interesting variants are a plane-like version (v1) using a positive minimum speed of \SI{0.25}{m/s}, and a plane-like version with a rudder damage (v2) ($\omega \in [-0.25, 0.5]~\si{rad/s}$).

\textbf{Unicycle ($2^{\text{nd}}$ order)} has a 5-dimensional state space $[x,y,\theta,v,\omega] \in \sX \subset \mathbb R^5$, a 2-dimensional $[\dot v, \dot\omega] \in \sU \subset \mathbb R^2$ control space, and dynamics defined in \cite[Eq. (13.46)]{lavallePlanningAlgorithms2006}.
Our version (v0) uses $v \in [-0.5, 0.5]~\si{m/s}$, $\omega \in [-0.5, 0.5]~\si{rad/s}$, $\dot v \in [-0.25, 0.25]~\si{m/s^2}$, and $\dot\omega \in [-0.25, 0.25]~\si{rad/s^2}$.

\textbf{Car with trailer} has a 4-dimensional state space $[x,y,\theta_0, \theta_1] \in \sX \subset \mathbb R^4$, a 2-dimensional $[v, \phi] \in \sU \subset \mathbb R^2$ control space, and dynamics and visualization given in \cite[Eq. (13.19), Fig. 13.6]{lavallePlanningAlgorithms2006}.
We add an additional constraint $|\angle(\theta_0, \theta_1)| < \pi / 4$ that avoids that the angle between the car and the trailer exceeds a threshold.
Our version (v0) uses $v \in [-0.1, 0.5]~\si{m/s}$, $\phi \in [-\pi/3, \pi/3]$, $L=\SI{0.25}{m}$, and $d_1=\SI{0.5}{m}$, where $L$ and $d_1$ are defined in~\cite{lavallePlanningAlgorithms2006}.

\textbf{Quadrotor} has a 13-dimensional state space (pose and first order derivatives using a Quaternion representation), a 4-dimensional control space (force for each of the four motors), and dynamics defined in \cite[Eq. (1)]{neuralswarm}.
We use the parameters of the Crazyflie quadrotor with limits on the motor forces, velocity, and angular velocity.
Note that the low thrust-to-weight ratio of \num{1.4} is very challenging for kinodynamic motion planning and that problem settings with a harsh initial condition prevent the use of specialized methods~\cite{liuSearchBasedMotionPlanning2018,zhouRobustEfficientQuadrotor2019}.

We use $\Delta t=\SI{0.1}{s}$ for all dynamical systems except the quadrotor, which uses $\Delta t=\SI{0.01}{s}$ due to the fast rotational dynamics.

\subsection{Environments}

For most of the dynamical systems, we consider three environments (see \cref{fig:env}), which are inspired by the common use-cases in the related literature. For the v2 unicycle, we use the \emph{wall} environment as shown in \cref{fig:overview}.
For the quadrotor, we use an \emph{empty} environment without obstacles.
The scenario requires the quadrotor to recover from a harsh initial condition with an upside-down initial rotation and nonzero initial first derivatives.
All environments only use simple geometric box shapes for efficient collision checking.
The environments are bounded, where the bounds only limit the translational part of the state, i.e., parts of the robots are allowed to be outside. One such example is visible in the park solution of \cref{fig:env}.

\begin{figure}
    \centering
    \includegraphics[width=0.32\linewidth]{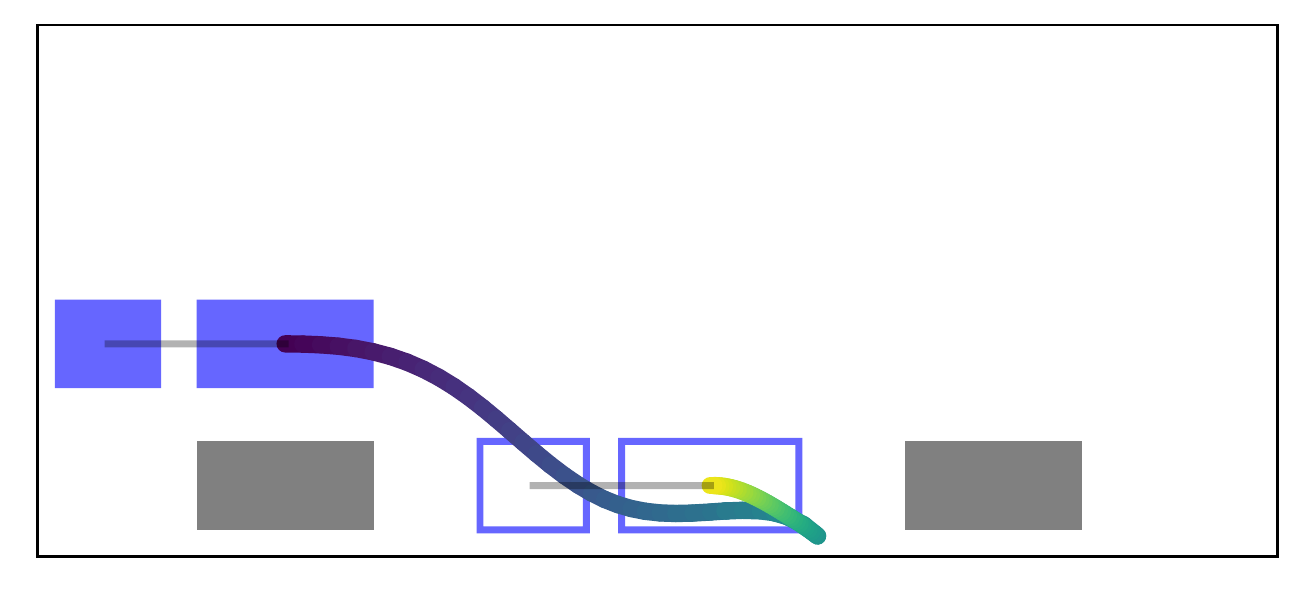}
    \includegraphics[width=0.32\linewidth]{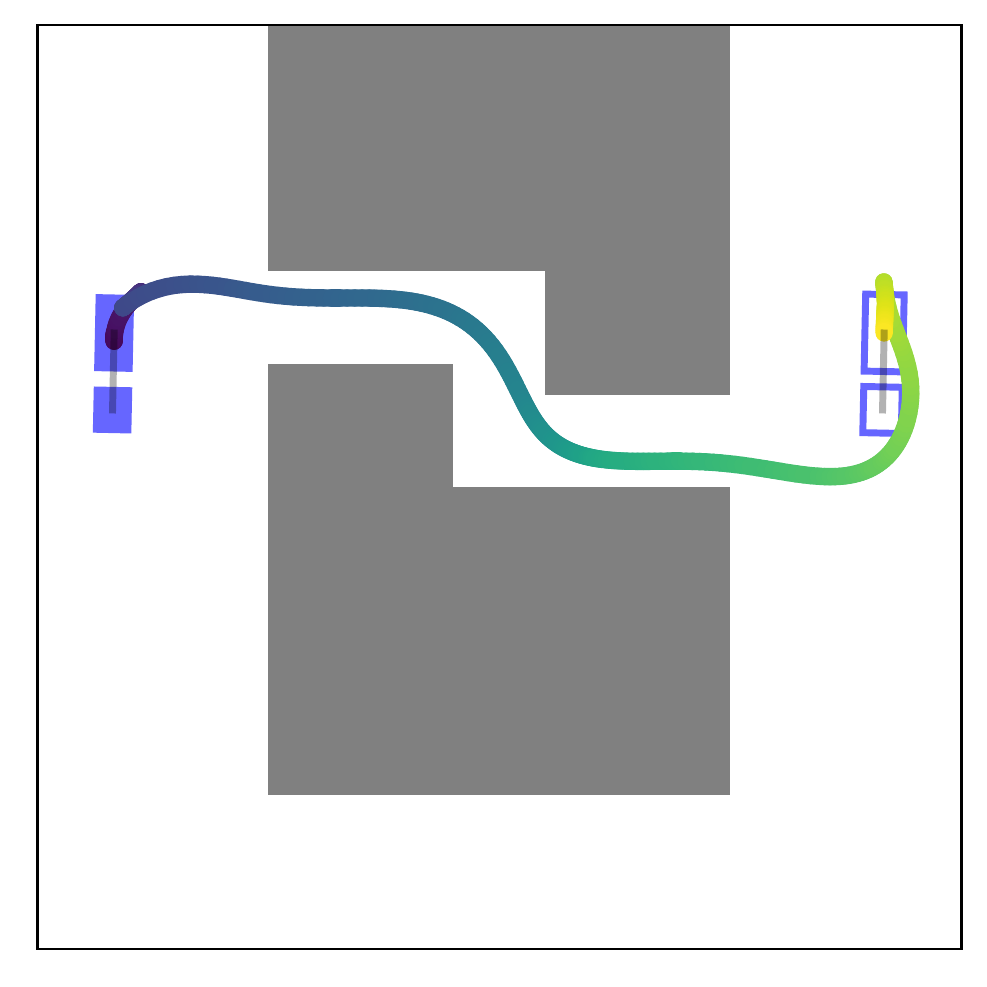}
    \includegraphics[width=0.32\linewidth]{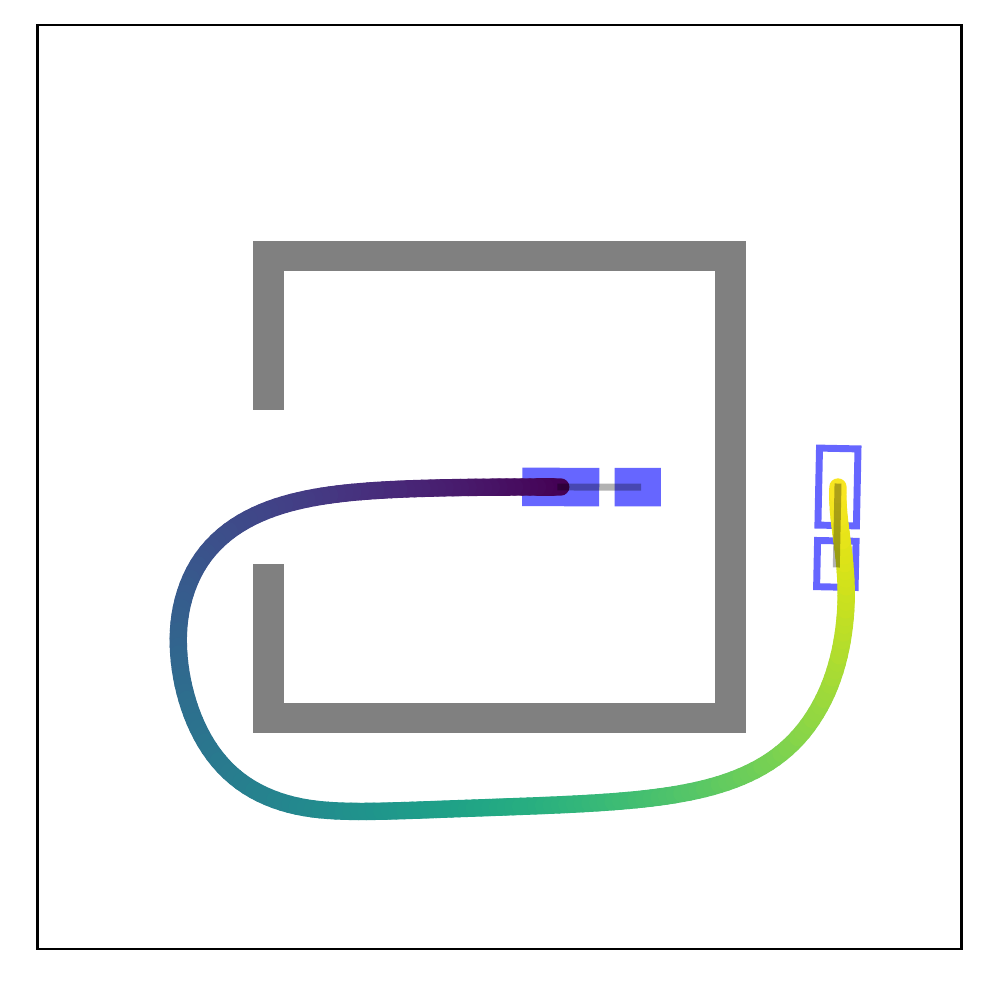}
    \caption{
    Example environments park, kink, and bugtrap (left to right) for the car with trailer dynamics. The start state is indicated in filled blue, the goal with a blue outline, and a near-optimal solution of the $(x,y)$-part of the state as computed by kMP-db-A* as a time-colored line.
    }
    \label{fig:env}
\end{figure}

\subsection{Algorithms}

\begin{table*}
    \caption{Benchmark results comparing success rate ($p$), time for the first found solution ($t^{\mathrm{st}}$), the cost of the first found solution ($J^{\mathrm{st}}$), and the cost of the solution after \SI{5}{min} ($J^{f}$). Time and cost are the median over 10 trials. Best results are \textbf{bold}.}
    \label{tab:results}

    \begin{center}
    \setlength{\tabcolsep}{0.3em} %
    \renewcommand{\arraystretch}{1.1}%
    \begin{tabular}{c || c|c || r|r|r|r || r|r|r|r || r|r|r|r || r|r|r|r}
        \# & System & Instance & \multicolumn{4}{c||}{SST*} & \multicolumn{4}{c||}{SBPL} & \multicolumn{4}{c||}{geom. RRT*+KOMO} & \multicolumn{4}{c}{kMP-db-A*}\\
        &&  & $p$ & $t^{\mathrm{st}} [s]$ & $J^{\mathrm{st}} [s]$ & $J^{f} [s]$ & $p$ & $t^{\mathrm{st}} [s]$ & $J^{\mathrm{st}} [s]$ & $J^{f} [s]$ & $p$ & $t^{\mathrm{st}} [s]$ & $J^{\mathrm{st}} [s]$ & $J^{f} [s]$ & $p$ & $t^{\mathrm{st}} [s]$ & $J^{\mathrm{st}} [s]$ & $J^{f} [s]$\\
        \hline
        \hline
        1 & \multirow{3}{*}{unicycle $1^{\mathrm{st}}$ order, v0} & park  & \bfseries 1.0 & 1.2 & 5.9 & 3.5 & \bfseries 1.0 & \bfseries 0.0 & 6.2 & 6.2 & \bfseries 1.0 & 3.1 & 5.3 & 3.2 & \bfseries 1.0 & 0.9 & \bfseries 3.2 & \bfseries 3.1\\
        2 &  & kink  & \bfseries 1.0 & 1.2 & 47.5 & 17.9 & \bfseries 1.0 & \bfseries 0.2 & 22.6 & 22.6 & \bfseries 1.0 & 9.9 & 24.8 & 21.7 & \bfseries 1.0 & 5.5 & \bfseries 15.4 & \bfseries 13.1\\
        3 &  & bugtrap  & \bfseries 1.0 & \bfseries 1.1 & 63.3 & 30.0 & \bfseries 1.0 & 1.4 & 36.8 & 36.6 & \bfseries 1.0 & 10.8 & 40.3 & 22.2 & \bfseries 1.0 & 21.6 & \bfseries 23.8 & \bfseries 22.1\\
        \hline
        4 & \multirow{1}{*}{unicycle $1^{\mathrm{st}}$ order, v1} & kink  & 0.8 & 38.5 & 43.2 & 34.3 &&&&  & 0.0 & \textemdash & \textemdash & \textemdash & \bfseries 1.0 & \bfseries 11.9 & \bfseries 23.9 & \bfseries 23.7\\
        \hline
        5 & \multirow{1}{*}{unicycle $1^{\mathrm{st}}$ order, v2} & wall  & 0.8 & 29.9 & 45.2 & 37.4 &&&&  & 0.0 & \textemdash & \textemdash & \textemdash & \bfseries 1.0 & \bfseries 7.3 & \bfseries 20.0 & \bfseries 18.0\\
        \hline
        6 & \multirow{3}{*}{unicycle $2^{\mathrm{nd}}$ order} & park  & \bfseries 1.0 & \bfseries 4.7 & 14.4 & 7.5 &&&&  & \bfseries 1.0 & 5.5 & 10.6 & \bfseries 6.1 & \bfseries 1.0 & 7.9 & \bfseries 6.8 & \bfseries 6.1\\
        7 &  & kink  & \bfseries 1.0 & \bfseries 2.6 & 71.0 & 59.7 &&&&  & 0.5 & 18.1 & 24.6 & 21.1 & \bfseries 1.0 & 10.7 & \bfseries 20.4 & \bfseries 19.8\\
        8 &  & bugtrap  & \bfseries 1.0 & \bfseries 5.2 & 66.8 & 51.1 &&&&  & \bfseries 1.0 & 23.2 & 39.9 & 27.2 & \bfseries 1.0 & 40.6 & \bfseries 33.9 & \bfseries 25.9\\
        \hline
        9 & \multirow{3}{*}{car with trailer} & park  & 0.6 & 24.6 & 13.6 & 13.6 &&&&  & \bfseries 1.0 & 15.6 & 10.8 & 6.2 & \bfseries 1.0 & \bfseries 10.0 & \bfseries 5.7 & \bfseries 5.4\\
        10 &  & kink  & \bfseries 1.0 & \bfseries 9.1 & 70.8 & 66.5 &&&&  & 0.1 & 217.5 & 174.4 & 130.8 & \bfseries 1.0 & 94.8 & \bfseries 34.1 & \bfseries 24.2\\
        11 &  & bugtrap  & \bfseries 1.0 & \bfseries 0.7 & 47.5 & 43.8 &&&&  & 0.0 & \textemdash & \textemdash & \textemdash & \bfseries 1.0 & 8.3 & \bfseries 21.5 & \bfseries 19.4\\
        \hline
        12 & \multirow{1}{*}{quadrotor} & empty  & 0.0 & \textemdash & \textemdash & \textemdash &&&&  & \bfseries 1.0 & 207.2 & 2.6 & 2.6 & \bfseries 1.0 & \bfseries 131.0 & \bfseries 1.6 & \bfseries 1.6\\
        \end{tabular}
    \end{center}
\end{table*}

For a \textbf{search-based} approach, we rely on SBPL\footnote{\url{https://github.com/sbpl/sbpl}} (Search-based Planning Library), a commonly used C++ library with integration in the Robot Operating System (ROS).
SBPL contains an example for unicycles, although the used dynamics do not match the ones from \cite[eq. 13.18]{lavallePlanningAlgorithms2006}.
Thus, we generate our own primitives using the formulation in \cref{sec:approach:opt}.
Moreover, we make minor adjustments to the heuristic to enable time-optimal anytime planning using the provided implementation of ARA* in SBPL.
Due to limits in SBPL\footnote{The official documentation states: ``[For custom scenarios], you will have to implement your own environment (a very involved topic that might be covered in the future).'' \url{http://sbpl.net/node/47}}, we limit our evaluation to the v0 first order unicycle.

For a \textbf{sampling-based} approach, we rely on OMPL~\cite{OMPL} (Open Motion Planning Library), a widely used C++ library with integration in ROS through MoveIt.
OMPL implements several kinodynamic planners, including SST*~\cite{SSTstar}, which we use.
As part of this work, we contribute minor changes to allow time as an optimization objective.
Since sampling-based kinodynamic approaches cannot reach a goal state, we use a goal region instead that we verify to be small enough such that an optimizer can find an exact solution.

For an \textbf{optimization-based} approach, we rely on RAI\footnote{\url{https://github.com/MarcToussaint/rai}} (Robotic AI), a C++ library that implements KOMO and nonlinear optimization algorithms.
For each of the dynamical systems, we implement the appropriate constraints and their derivative computation.
In case of the trailer and the quadrotor, we add parts of the actions as decision variables (angle $\phi$ and motor forces, respectively); otherwise the decision variables are the state sequences only.
As an initial guess, we use a geometric solution as found by RRT* of OMPL.
We then use the modified binary search method as outlined in \cref{sec:approach:opt}.
This combination of \emph{geometric RRT*+KOMO} is anytime like the other approaches we compare to. 

For \textbf{db-A*}, we implement \cref{alg:dbAstar} in C++ using the data structures provided in OMPL to represent states and for nearest neighbor computation.
\Cref{alg:overview} is implemented in Python that executes C++ binaries for subroutines when necessary.
As heuristic $h$, we use Euclidean distance divided by the upper bound of the speed.
For the \texttt{AddPrimitives} function, we precompute \num{10000} motion primitives for most dynamical systems (\num{30000} for the quadrotor) and only add a subset per iteration.
Generating the primitives took about \SI{8}{h} per dynamical system utilizing all CPU cores.

The benchmark infrastructure is written in Python and all tuning parameters can be found in the open-source repository\footnote{\url{https://github.com/IMRCLab/kinodynamic-motion-planning-benchmark}}.
Collision checking is done using FCL (Flexible Collision Library)~\cite{FCL} in all cases.
All approaches use the Euler integration \cref{eq:dynamics_discrete}, although KOMO uses an implicit formulation by design.

\subsection{Benchmark}

We execute our benchmark on a desktop computer with AMD Ryzen 9 3900X (\SI{3.8}{GHz}) and \SI{32}{GB} RAM.
Our results are summarized in \cref{tab:results}.

We summarize the main results as follows.
\textbf{SBPL} can compute results very quickly and consistently. The initial solution quality is very high, but due to the limited number of primitives, the solution does not improve much over time (rows 1 -- 3, \cref{tab:results}).
The approach is not as general as the other ones, and we were unable to use it for all of our dynamical systems.
\textbf{SST*} can find an initial solution very quickly; however the solution quality is initially poor, especially with higher-dimensional systems (rows 6--11).
The convergence is slow -- our \SI{5}{min} timeout was not sufficient for SST* to fully converge in any of the cases.
\textbf{Geometric RRT*+KOMO} can find near-optimal initial solutions, but does not work well in instances that require long trajectories and fails if the geometric initial guess is not close to a dynamically feasible motion.
For example, finding an initial solution in the kink and bugtrap examples (rows 7, 8) took significantly longer than parallelpark.
Another drawback is that this approach is incomplete, as visible for the v1 and v2 unicycle systems (row 4 and 5) and not globally optimal, e.g., row 10 and 11 show a very poor solution quality after \SI{5}{min}.
\textbf{kMP-db-A*} converged to the lowest-cost solution during the time limit in all cases.
At the same time, it found the highest-quality first solution in all cases, although it often took more time to compute an initial solution than the other algorithms.
We found that this is mostly caused by the challenging scenario of time-optimal planning: most motion primitives are time-optimal, i.e., result in bang-bang control.
When allowing discontinuities, the estimated time horizon is often too short for the optimizer to find a solution, requiring multiple iterations in \cref{alg:overview} to report the first solution.

For brevity, \cref{tab:results} does not include any standard deviation.
In general, we found that SBPL has almost no variance, SST* has a very high variance, and KOMO and kMP-db-A* are somewhere in between the two extremes.
One example that includes the convergence behavior as well as the variance is shown in \cref{fig:sol_ex}.

\begin{figure}
    \centering
    \includegraphics[width=0.9\linewidth]{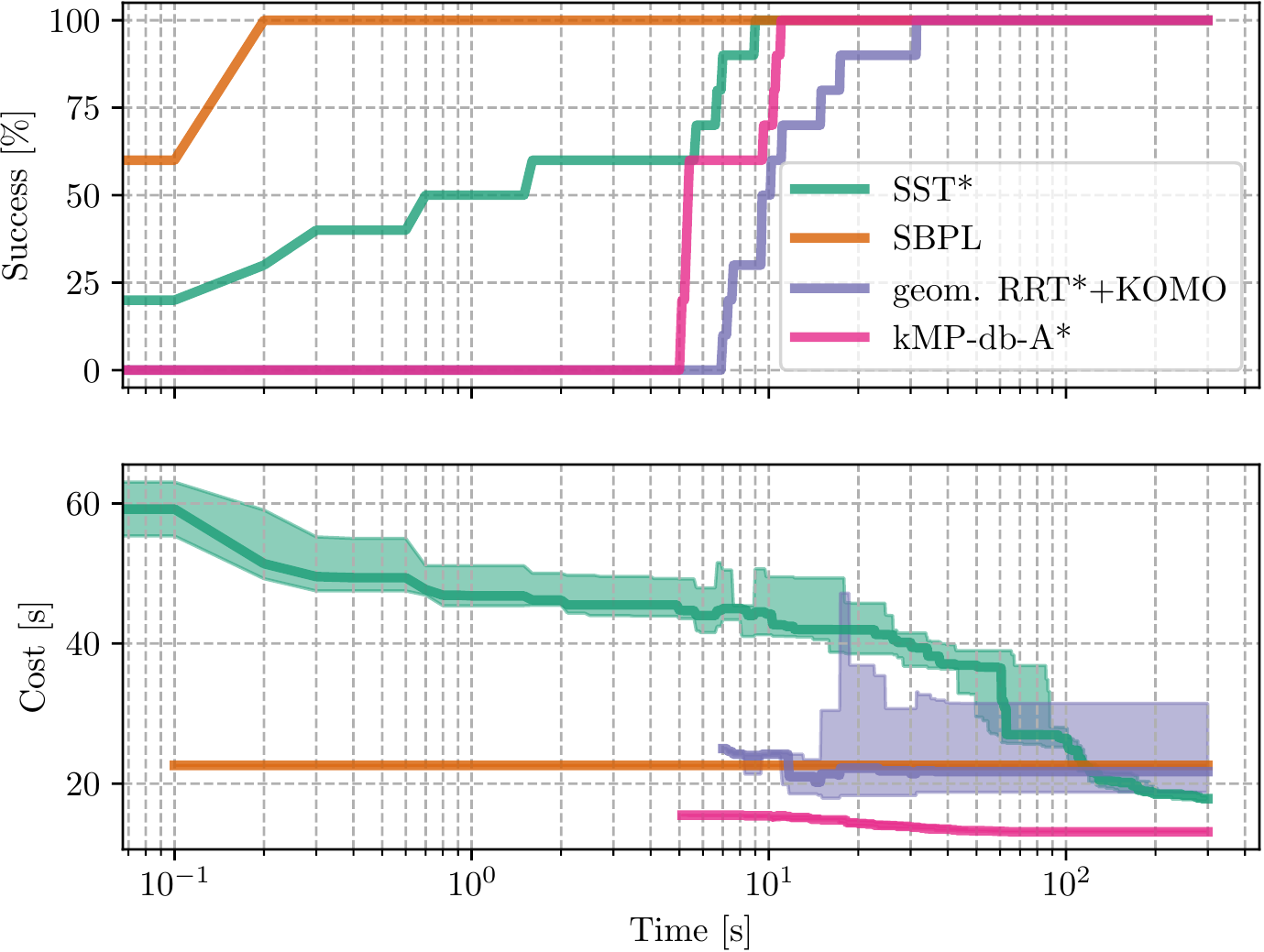}
    \caption{
    Success rate and solution cost over runtime (log-scale) for the unicycle ($1^{\text{st}}$ order) dynamical system in the kink environment (row 2 in \cref{tab:results}).
    The line is the median and the shaded region shows the first/third quartile over all trials that found a solution so far (up to 10 in total).
    }
    \label{fig:sol_ex}
\end{figure}

The runtime of the individual components of kMP-db-A* can vary widely, depending on $\delta$.
For example, in the bugtrap example for the trailer (row 11) it takes around \SI{2}{s} for db-A* to find a solution with $\delta=0.33$ and \SI{14}{s} for the optimization, while during later iterations db-A* requires \SI{46}{s} ($\delta=0.12$) and the optimization only \SI{6}{s}.

\section{Conclusion} 
\label{sec:conclusion}

We present a new kinodynamic motion planning technique, kMP-db-A*, that uses a novel graph-search method with trajectory optimization in an iterative fashion.
For the graph search, we introduce db-A*, a generalization of A* that reuses motion primitives to compute trajectories with a bounded discontinuity.
Then, we warm-start trajectory optimization using the output of the graph search and compute new motion primitives online.
KMP-db-A* combines ideas and advantages of sampling-based, search-based, and optimization-based kinodynamic motion planners: it converges asymptotically to the optimal solution, directly solves for the time horizon, finds a near-optimal solution quickly, and does not require any additional post-processing.

The major limitation of kMP-db-A* is that it sometimes requires a long time to compute an initial solution.
We believe that this is not a fundamental issue and that it can be improved using the following techniques in the future.
First, we are interested in using stronger heuristics and bounded suboptimal and incremental graph search techniques to reuse information between iterations.
Second, we plan to investigate the use of optimizers that do not operate over the full trajectory time horizon.
Finally, we believe that our work also lays the foundation for novel kinodynamic multi-robot motion planners.

\bibliographystyle{IEEEtran}
\bibliography{IEEEabrv,references}

\end{document}